\definecolor{atomictangerine}{rgb}{0.8, 0.2, 0.1}
\definecolor{turq}{rgb}{0.0, 0.5, 0.5}
\definecolor{darkturq}{rgb}{0.0, 0.4, 0.4}
\definecolor{bright}{rgb}{0.8, 0.1, 0}
\definecolor{darkgray}{gray}{0.3}
\definecolor{mahogany}{rgb}{0.6, 0.05, 0.05}
\definecolor{myblue}{rgb}{0.3,0.05,0.9}
\definecolor{olive}{rgb}{0.537, 0.627, 0.318}
\definecolor{green}{rgb}{0.22, 0.463, 0.114}
\definecolor{grey}{rgb}{0.4, 0.4, 0.4}
\definecolor{blue}{rgb}{0.435, 0.659, 0.863}
\definecolor{pink}{rgb}{0.761, 0.482, 0.627}
\definecolor{darkpink}{rgb}{0.561, 0.282, 0.427}
\newtheorem{theorem}{Theorem}
\newcommand\ignore[1]{}
\newcommand\DRO{{DRO}}
\newcommand{\epsc}{2\varepsilon_c}
\newcommand{\ecentroid}{\widehat{\mu_c}}
\newcommand{\centroid}{\mu_c}
\newcommand{\dist}{d}
\newcommand{\distz}{\dist(\centroid,z_i)}
\newcommand{\distzp}{\dist(\centroid,z')}
\newcommand{\edistz}{\dist(\ecentroid,z)}
\newcommand{\edistzp}{\dist(\ecentroid,z')}
\newcommand{\qdistz}{\dist(\centroid^q,z)}
\newcommand{\qdistzp}{\dist(\centroid^q,z')}
\newcommand\mypar[1]{\textbf{ #1 }}
\ificcvfinal\pagestyle{empty}\fi
\begin{document}

\title{Distributional Robustness Loss for Long-tail Learning}

\author{Dvir Samuel\\
    Bar-Ilan University, Israel\\
    {\tt\small dvirsamuel@gmail.com}
    \and
    Gal Chechik\\
    Bar-Ilan University, Israel\\
    NVIDIA Research, Israel\\
    {\tt\small gal.chechik@biu.ac.il}
}

\maketitle
\ificcvfinal\thispagestyle{empty}\fi

\begin{abstract}
Real-world data is often unbalanced and long-tailed, but deep models struggle to recognize rare classes in the presence of frequent classes. To address unbalanced data, most studies try balancing the data, the loss, or the classifier to reduce classification bias towards head classes. Far less attention has been given to the latent representations learned with unbalanced data. We show that the feature extractor part of deep networks suffers greatly from this bias. We propose a new loss based on robustness theory, which encourages the model to learn high-quality representations for both head and tail classes. While the general form of the robustness loss may be hard to compute, we further derive an easy-to-compute upper bound that can be minimized efficiently. This procedure reduces representation bias towards head classes in the feature space and achieves new SOTA results on CIFAR100-LT, ImageNet-LT, and iNaturalist long-tail benchmarks. We find that training with robustness increases recognition accuracy of tail classes while largely maintaining the accuracy of head classes. The new robustness loss can be combined with various classifier balancing techniques and can be applied to representations at several layers of the deep model. 
\end{abstract}

\section{Introduction}
Real-world data typically has a long-tailed distribution over semantic classes: few classes are highly frequent, while many classes are only rarely encountered. When trained with such unbalanced data, deep models 
tend to produce predictions that are biased and over-confident towards head classes and fail to recognize tail classes. 

Early approaches for handling unbalanced data used resampling \cite{Drummond2003C4,Han2005BorderlineSMOTEAN} or loss reweighing \cite{He_2009_IEEE,Lin2017FocalLF} aiming to re-balance the training process. 
Other approaches address unbalanced data by transferring information from head to tail classes \cite{liu2019large,zhou2020bbn, wang2019dynamic,liu2020deep}, or by applying an adaptive loss \cite{cao2019learning} or regularization of classifiers \cite{Kang2019DecouplingRA}. The main focus of these techniques is on balancing the multi-class classifier. 

\begin{figure}
    \begin{center}
        \includegraphics[width=0.99\columnwidth]{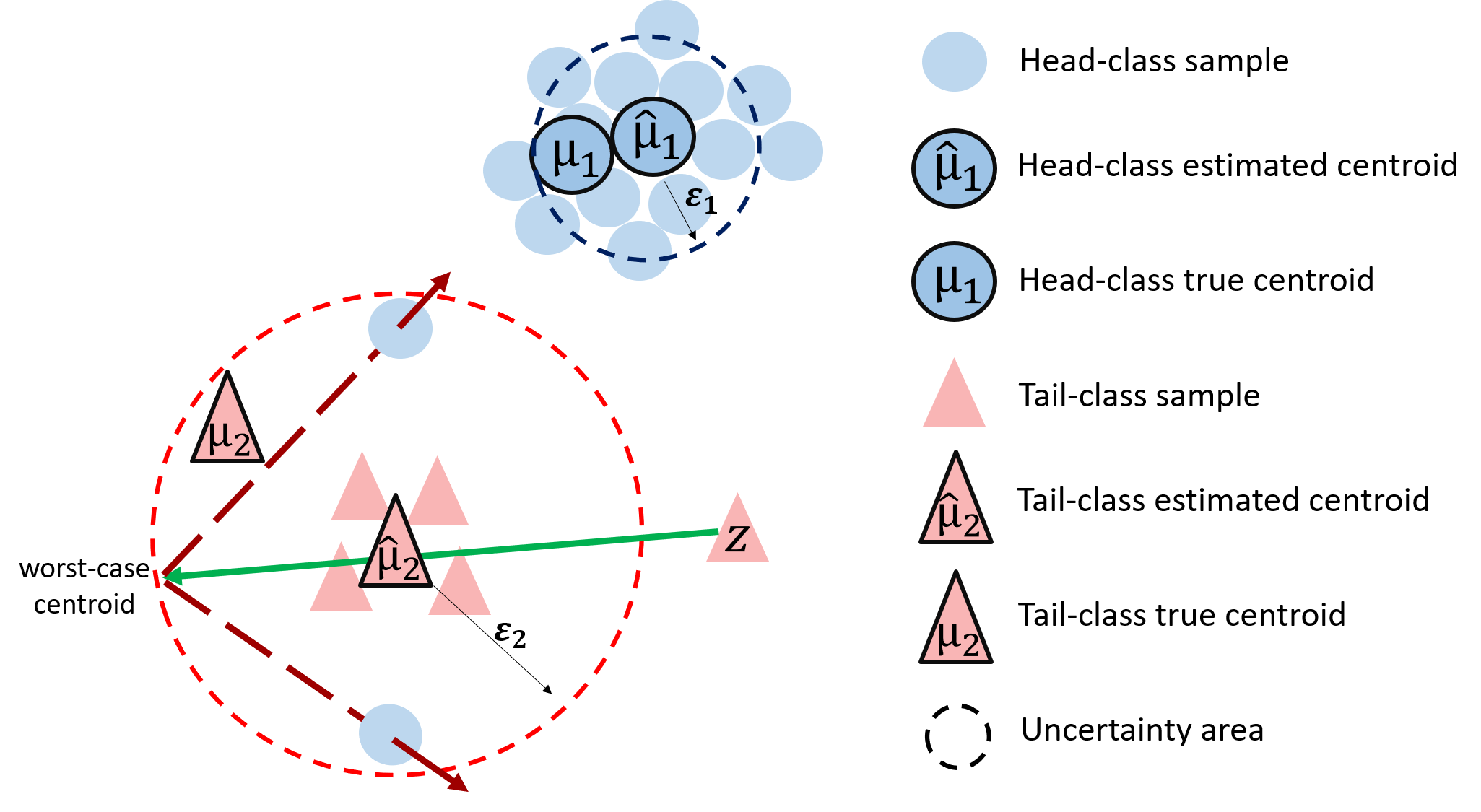}
    \end{center}
    \caption{
    Our \textit{distributional robustness loss} is designed for learning a representation where samples are kept close to the centroid of their class. Here, the empirical centroid $\widehat{\mu_2}$ (framed pink triangle) is estimated based only on few samples (four pink triangles) and as a result it deviates far from the true centroid ($\mu_2$). Our loss pulls the same-class samples (green arrow), and pushes away other-class samples (red arrows). 
    The loss takes into account the estimation error by pushing and pulling towards a worst-case possible distribution within an uncertainty area around the estimated centroid (dashed red line). Uncertainty areas are typically larger for tail classes, compared with head classes that have many samples (blue dashed line around $\widehat{\mu_1}$).
}
    \label{fig:method}
\end{figure}

Far less attention has been given to the latent representations learned with unbalanced data. Intuitively, head classes are encountered more often during training and are expected to dominate the latent representation at the top layer of a deep model. Counter to this intuition,  \cite{Kang2019DecouplingRA} compared a series of techniques for rebalancing representations and concluded that \textit{"data imbalance might not be an issue in learning high-quality representations"}, and that strong long-tailed recognition can be achieved by only adjusting the classifier. However, the effect of unbalanced data on the learned representations is far from being understood, and the extent to which it may hurt classification accuracy is unknown. While existing rebalancing methods do not improve representations, the question remains if better representations could substantially improve recognition with unbalanced data.  

The current paper focuses on improving the representation layer of deep models trained with unbalanced data. We show that large gains in accuracy can be achieved simply by balancing the representation at the last layer. The key insight is that the distribution of training samples of tail classes does not represent well the true distribution of the data. This yields representations that hinder the classifier applied to that representation.

To address this problem, we introduce ideas from robust optimization theory. We design a loss that is more robust to the uncertainty and variability of tail representations. Standard training follows Empirical Risk Minimization (ERM), which is designed to learn models that perform well on the training distribution. However, ERM assumes that the test distribution is the same as the training distribution, and this assumption often breaks with tail classes. In comparison, Distributionally Robust Optimization (DRO) \cite{DRO,GohDRO,bertsDRO} is designed to be robust against likely shifts of the test distribution. It learns classifiers that can handle the worst-case distribution within a neighborhood of the training distribution. Figure 1 illustrates this idea. 

In the general case, computing a loss based on a worst-case distribution may be computationally hard. Here, we show how the worst-case loss can be bounded from above, with a bound that has an intuitive form and can be easily minimized. The resulting bound  allows us to minimize the DRO loss, which only affects the representation, and to combine it with a standard classification loss, which tunes a classifier on top of the representation.

~\newline
The main contributions of this paper are:
\begin{enumerate}[noitemsep,nolistsep]
  \item We formulate learning with unbalanced data as a problem of robust optimization and highlight the role of high variance in hindering tail accuracy.
  \item We develop a new loss, \textbf{DRO-LT Loss}, based on distributional robustness optimization for learning a balanced feature extractor. Training with DRO-LT yields representations that capture well both head and tail classes. 
  \item We derive an upper bound of the DRO-LT loss that can be computed and optimized efficiently. We further show how to learn the robustness safety margin for each class, jointly with the task, avoiding additional hyper-parameter tuning.
  \item We evaluate our approach on four long-tailed visual recognition benchmarks: CIFAR100-LT, CIFAR10-LT, ImageNet-LT, and iNaturalist. Our proposed method consistently achieve superior performance over previous models.
\end{enumerate}

\section{Related Work}

\subsection{Long-tail recognition}
Real-world data usually follows a long-tailed distribution, which causes models to favor head classes and overfit tail classes ~\cite{buda2017systematic, samuel2020longtail}.
Previous efforts to address this effect can be divided into four main approaches: Data-manipulation approaches, Loss-manipulation approaches, Two-stage fine-tuning, and ensemble methods.

\textbf{Data-manipulation (re-sampling):} Data-manipulation approaches aim to balance long-tail data. There are three popular techniques of resampling strategies:
(1) Over-sampling minority (tail) classes by simply copying samples \cite{chawla2002smote, Han2005BorderlineSMOTEAN}. (2) Under-sampling majority (head) classes by removing samples \cite{Drummond2003C4,hu2020learning}. (3) Generating augmented samples to supplement tail classes \cite{Beery2019SyntheticEI, chu2020feature,Kim2020M2mIC,chawla2002smote}.
While simple and intuitive, \textit{over-sampling} methods suffer from heavy over-fitting on the tail classes, \textit{under-sampling} methods degrade the generalization of models, and \textit{data augmentation} methods are expensive to develop.

\textbf{Loss-manipulation (re-weighting):}
Loss reweighting approaches encourage learning of tail classes by setting costs to be non-uniform across classes. For instance, \cite{He_2009_IEEE} scaled the loss by inverse class frequency. An alternative strategy down-weighs the loss of well-classified examples, preventing easy negatives from dominating the loss \cite{Lin2017FocalLF} or dynamically rescale the cross-entropy loss based on the difficulty to classify a sample \cite{Ryou_2019_ICCV}. 
\cite{cao2019learning} proposed to encourage larger margins for rare classes. \cite{khan2019striking} use class-uncertainty information, using Bayesian uncertainty estimates, to learn robust features, and \cite{Cui2019ClassBalancedLB} reweighs using the effective number of samples instead of proportional frequency.

\textbf{Two-stage fine-tuning:}
Two-stage methods separate the training process into representation learning and classifier learning \cite{Ouyang2016FactorsIF, Kang2019DecouplingRA, openlongtailrecognition, samuel2020longtail}. The first stage aims to learn good representations from unmodified long-tailed data, training using conventional cross-entropy without re-sampling or re-weighting.  The second stage aims to balance the classifier by freezing the backbone and finetuning the last fully connected layers with re-sampling techniques or by learning to unbias the confidence of the classifier. Those methods basically assume that the bias towards head classes is significant only in the classifier (i.e, last fully connected layer), or that tweaking the classifier layer can correct the underlying biases in the representation.

\textbf{Ensemble-models:}
Ensemble models focus on generating a balanced model by assembling and grouping models. Typically, classes are separated into groups, where classes that contain similar training instances are grouped together. Then, individual models focused on each group are assembled to form a multi-expert framework. \cite{zhou2020bbn} learned one branch for head classes and another for tail classes, then combine the branches using a soft-fusion procedure.
\cite{LFME} distilled a unified model from multiple teacher classifiers. Each classifier focuses on the classification of a small and relatively balanced group of classes from the data.
\cite{ride} described a shared architecture for multiple classifiers, a distribution-aware loss, and an expert routing module. The current paper proposes a two-stage approach for training one model with a single classifier. Thus, we compare our method with non-ensemble approaches.

\subsection{Distributionally robust optimization}
Distributionally Robust Optimization (DRO) \cite{DRO,GohDRO,bertsDRO} considers the case where the test distribution is different from the train distribution. It does that by defining an uncertainty set of test distributions around the training distribution and learning a classifier that minimizes the expected risk with respect to the worst-case test distribution. DRO has been shown to be equivalent to standard robust optimization (RO) \cite{xu2012distributional} and has a strong relationship with regularization methods \cite{DROLogistic}, Risk-Aversion methods and game theory \cite{DROreview}. For more details, see \cite{DROreview}. As far as we know, DRO theory was not applied for long-tail learning.

Many studies looked into learning representations that are robust to adversarial attacks \cite{bertsimas2019robust,arnab2018robustness,goswami2018unravelling} but this is outside the scope of the current paper.

\section{Overview of our approach}
We start with an overview of the main idea of our approach and provide the details in subsequent sections. 

Our goal is to learn a representation at the last layer of a deep network, such that the distributions of samples of different classes are well separated. Then, they can later be correctly classified by different linear classifiers. Deep networks can learn such representations efficiently when trained with enough labeled data. However, when a class has only few samples, the distribution of training samples may not represent well the true distribution of the data, and the representations learned by the model hinder the classifier.

To remedy this shortcoming, we design a loss that is applied to the representation, which takes into account errors due to a small number of samples. Figure \ref{fig:method} illustrates this idea. 
Our loss extends standard contrastive losses, which pull samples closer to the centroid of their own class and push away samples that belong to other classes. Our new loss accounts for the fact that the true centroids are unknown, and their estimate is noisy. It, therefore, optimizes against the worst possible centroids within a safety hyper ball around the empirical centroid.

In the general case, computing such a worst-case loss may be computationally hard. We further derive an upper bound of that loss that can be computed easily and show that using that bound as a surrogate loss, yields significantly better representations. 
The resulting loss has a simple general form, yielding that the loss for a sample $z$ is 
\begin{eqnarray}
    \mathcal{L}(z)_{Robust} = - 
    \log{\frac{e^{-\edistz-\Delta}} 
    {\sum_{z'}{e^{-\edistzp-\Delta'}}}}.
    \label{per-sample-loss}
\end{eqnarray}
where $\edistz$ measures the distance in feature space between a sample $z$ and the estimated centroid of its class $\ecentroid$, $\Delta$ and $\Delta'$ are robustness margins that we describe below.

\section{Distributional Robust Optimization}

When learning a classifier, we seek a model $f$ that minimizes the expected loss for the data distribution $P(x,y)$. 
\begin{equation}
      \min_{f}E_{x \sim P}[l_f(x)] = \int l(f(x),y) d P(x,y).
\end{equation}
Since the data distribution is not known, Empirical Risk Minimization (ERM) \cite{vapnik2013nature} proposes to use the training data for an  empirical estimate of the data distribution $P_{\delta} = \frac{1}{n}\sum \delta(x=x_i, y=y_i)$, where $\delta$ is the kronecker delta. 
\begin{equation}
     \textbf{ERM: \quad }\min_{f}E_{(x,y) \sim P_{\delta}}\left[l_f(x)\right] 
\end{equation}
Unfortunately, using $P_\delta$ to approximate $P$ makes the naive assumption that the test distribution would be close to the empirical train distribution. That assumption may be far from true when the training data is small. In those cases, it is beneficial to choose other estimates of $P$ that are more likely to reduce the loss over the test distribution.

One such solution is given by \textit{Distributionally robust optimization} (DRO) \cite{GohDRO, bertsDRO}. It suggests learning a model $f$ that minimizes the loss within a family of possible distributions 
\begin{equation}
    \textbf{DRO: \quad } \min_{f}\sup_{Q\in U}{E_{(x,y)\sim Q}[l_f(x)]}. 
\end{equation}
DRO aims to perform well simultaneously for a set of test distributions, each in an uncertainty set $U$. The set of distributions $U$ is typically chosen as a hyper ball of radius $\epsilon$ ($\epsilon$-ball) around the empirical training distribution $\widehat{P}$:
\begin{equation}    
    U := \{Q: D(Q,\widehat{P}) \leq \epsilon \} ,
\end{equation}
where $D$ is a discrepancy measure between distributions, usually chosen to be the KL-divergence or the Wasserstein distance.

\section{Our approach}

We now formally describe our approach: \textbf{DRO-LT Loss}.
\subsection{Preliminaries}
We are given $n$ labeled samples $(x_i,y_i)$ $i=1,\ldots,n$, where $y_i$ is one of $k$ classes $\{c_1,\ldots,c_k\}$.
Let $f_{\theta}$ be a feature extractor function $f$ with learnable parameters $\theta$, which maps any given input sample $x_i$ to $z_i = f_{\theta}(x_i)$. The set of mapped input $Z={z_1,...z_n}$ resides in some latent vector space.

Let $S_c=\{z_i|y_i=c\}$ be the set of feature vectors representing samples whose label is $c$. We denote by $\ecentroid$ the empirical centroid of that set, namely the mean of all feature vectors $z$ of class $c$, $\ecentroid = \frac{1}{|c|}\sum_{i=1}^{|c|}{z_i}$. We also denote by $\centroid$ the mean of the true data distribution $P$ of samples from class $c$, $\centroid = E_{x\sim P|y=c}[z]$.

\subsection{A loss for representation learning}
We adopt ideas from metric learning and representation learning \cite{weinberger2006distance,kulis2012metric,snell2017prototypical} for designing a representation-learning loss. 

Given a representation $z_i$ of a sample $x_i$ that has a label $y_i=c$, we wish to design a loss that maps $z_i$ close in feature space to other samples from the same class, and far from samples of other classes. 
We start by developing a contrastive loss and later extend it to a robust one.

Consider a sample $(x_i, y_i)$ from a class $y_i=c$, whose feature representation is $z_i$. 
We model the samples of class $c$ as if they are distributed around a centroid $\mu_c$, and the likelihood of a sample decays exponentially with the distance from the centroid $\mu_c$ of class $c$. Such exponential decay has been long used in metric learning and can be viewed as reflecting a random walk over the representation space \cite{goldberger2004neighbourhood, globerson2004euclidean}. The normalized likelihood of a sample is, therefore:    
\begin{equation}
    P(z_i|\centroid) = \frac{e^{-\distz}}{\sum_{z' \in Z}{e^{-\distzp}}},
    \label{norm_LL}
\end{equation}
where $\dist$ is a measure of divergence or distance between the centroid $\centroid$ and a feature representation $z_i$. $\dist$ is typically set to be the Euclidean distance, but can also be modeled with heavier tails as when using a student t-distribution \cite{vanDerMaaten2008}.

Similarly, for a set $S_c = \{z_i | y_i = c\}$, the log-likelihood is: 
\begin{equation}
    \log P(S_c|\centroid) = \sum_{z_i \in S_c}
    \log \frac{e^{-\distz}}
    {\sum_{z' \in Z}{e^{-\distzp}}}.
    \label{set_LL}
\end{equation}
We define a negative log-likelihood loss as a weighted average over per-class losses:
\begin{eqnarray}
    \mathcal{L_{NLL}} &=& \sum_{c \in C} w(c)
    \left(-\log P(S_c|\centroid)\right) \\ \nonumber
    &=& - \sum_{c \in C} w(c)
    \sum_{z_i \in S_c}
    \log{\frac{e^{-\distz}} 
    {\sum_{z' \in Z}{e^{-\distzp}}}},
    \label{NLL_loss}
\end{eqnarray}
where $w(c)$ are class weights. Setting $w(c) = \frac{1}{|S_c|}$ gives equal weighting to all classes and prevents head classes from dominating the loss.  

\subsection{A robust loss}
The log-likelihood loss of Eq. \eqref{set_LL} operates under the assumption that the centroid of each class $\centroid$ is known. In reality, it is not available to us. Following Empirical-risk minimization, we could naively plug in the empirical estimate $\ecentroid$ in Eq. \eqref{set_LL}, but $\ecentroid$ may be a poor approximation of $\centroid$, 
and estimation error grows as the number of samples $|S_c|$ decreases. As a result, the log-likelihood and the loss would also be badly estimated.

Instead of approximating the NLL loss ($-\log P(S_c|\centroid)$), we show that we can bound it with high probability by computing the worst-case loss over a set of candidate centroids. 
For that purpose, we take an approach based on distributionally robust optimization.

Let $\widehat{p_c}$ be the empirical  distribution of samples of class $c$. We define the uncertainty set of candidate distributions in its neighborhood to be 
\begin{equation}
    U_c := \{q | D(q || \widehat{p_c}) \leq \epsilon_{c}\},
\end{equation}
where $D$ is a measure of divergence of two  distributions. 
Specifically, we consider here the case where $D$ is the Kullback-Leibler divergence, and the distributions under consideration are same-variance spherical Gaussians. In this case, their divergence equals 
$%
    D_{KL}(q||\widehat{p_c}) = \frac{1}{2\sigma^2}d(\mu_q, \mu_{\widehat{p_c}} )^2
$%
\cite{Cover1991ElementsOI}, where $d$ is the Euclidean distance. Hence for any $q\in U_c$, we have 
\begin{equation}
    d(\mu_q, \mu_{\widehat{p_c}}) \leq  \sigma_c\sqrt{2\epsilon_c} \equiv \varepsilon_c
    \label{eq:bound},  
\end{equation}
where we define $\varepsilon_c\equiv\sigma_c\sqrt{2\epsilon_c}$ for convenience.

We now derive an upper bound on the NLL loss that we can compute using the estimated centroids $\ecentroid$. The form of the bound is very intuitive,  it can be viewed as a modification of the NLL loss, where the distance between a sample and an empirical centroid, is increased by a factor that depends on the radius of the uncertainty ball. 
\begin{theorem}
    Let $\epsilon_c$ be the radius of the uncertainty set $U_c$ and let $\sigma_c$ be the variance of the distribution of a sample in class $c$. Let $p(\epsilon)$ be the probability that the true distribution whose centroid is  $\centroid$ is within $U_c$.
    The per-class negative log-likelihood is bounded by 
    \begin{eqnarray}
        -\log P(z|\centroid)
        \le
         -\log \frac
               {e^{-\edistz-\epsc}}
               {\sum_{z' \in Z} e^{-\edistzp-\epsc\delta(z',c)} 
               }
    \end{eqnarray}
    with probability $p(\epsilon)$, where $\delta(z,c)=1$ if $z$ is of the class $c$ and 0 otherwise and $\varepsilon_{c} = \sigma_{c}\sqrt{2\epsilon_{c}}$.
\end{theorem}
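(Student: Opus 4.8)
The plan is to convert the distributional hypothesis into a purely geometric constraint on the centroid, and then to control the per-sample negative log-likelihood with two elementary triangle inequalities. The first step uses the assumption that the true distribution of class $c$ lies in $U_c$ with probability $p(\epsilon)$: on that event $D_{KL}(p_c\|\widehat{p_c})\le\epsilon_c$, so by the Gaussian KL identity quoted just before Eq.~\eqref{eq:bound}, the true centroid satisfies $d(\centroid,\ecentroid)\le\sigma_c\sqrt{2\epsilon_c}=\varepsilon_c$. Everything that follows is therefore conditioned on this event, and the final inequality automatically inherits the probability $p(\epsilon)$. This reduces the claim to a deterministic statement: given $d(\centroid,\ecentroid)\le\varepsilon_c$, bound $-\log P(z\mid\centroid)$ using only quantities computed from $\ecentroid$.

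Second, I would write the NLL in additive form $-\log P(z\mid\centroid)=d(\centroid,z)+\log\sum_{z'\in Z}e^{-d(\centroid,z')}$ and bound the two pieces in opposite directions, since they enter with opposite monotonicity. For the numerator I use the forward triangle inequality $d(\centroid,z)\le d(\ecentroid,z)+\varepsilon_c$. For each term of the normalizer I use the reverse triangle inequality $d(\centroid,z')\ge d(\ecentroid,z')-\varepsilon_c$, which upper-bounds every $e^{-d(\centroid,z')}$ by $e^{\varepsilon_c}e^{-d(\ecentroid,z')}$, and hence upper-bounds $\log\sum_{z'}e^{-d(\centroid,z')}$. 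Both substitutions move the bound in the correct direction, so their sum is a genuine upper bound on the NLL expressed entirely through $\ecentroid$ and $\varepsilon_c$.

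Third, I would collect the margins and rewrite the result back in the compact $-\log(\cdot)$ form. The numerator contributes one factor $\varepsilon_c$, and the constant $\varepsilon_c$ factored out of the normalizer can be folded back into the numerator exponent; combining the two is precisely where the aggregate margin $\epsc$ of the statement appears. The indicator $\delta(z',c)$ then records the bookkeeping of which normalizer terms carry the uncertainty: the worst-case displacement of $\centroid$ that inflates $d(\centroid,z)$ in the numerator is coherent with the same-class samples, so those terms inherit the margin, while cross-class terms are kept at their empirical distance. Substituting $\varepsilon_c=\sigma_c\sqrt{2\epsilon_c}$ recovers the exact constants in the statement.

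The step I expect to be the crux is the bookkeeping of the normalizer, not the triangle inequalities themselves. A single centroid $\centroid$ couples every sample in the denominator, so the two extremal requirements — pushing $\centroid$ away from $z$ while pulling it toward each $z'$ — cannot be realized simultaneously; the clean bound is obtained by bounding the numerator and each normalizer term \emph{separately}, and the delicate part is verifying that the constant extracted from the sum lands as $\epsc$ in the numerator and that the same-class versus other-class split is faithfully captured by $\delta(z',c)$. I would close by stating explicitly that the whole chain is valid on the event $d(\centroid,\ecentroid)\le\varepsilon_c$, which occurs with probability $p(\epsilon)$, so the displayed inequality holds with that probability as claimed.
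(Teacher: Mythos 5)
Your first two steps are sound and coincide with the paper's own proof: condition on the event that the true class distribution lies in $U_c$ (probability $p(\epsilon)$), translate that through the Gaussian KL identity into the geometric constraint $d(\centroid,\ecentroid)\le\varepsilon_c$, and then control the numerator and the normalizer terms with the two triangle inequalities. But the step you yourself flag as the crux is a genuine gap, and it cannot be closed, because the statement it is supposed to deliver is strictly stronger than what those inequalities give. What your substitutions actually prove is
\begin{equation*}
    -\log P(z|\centroid)\;\le\; -\log\frac{e^{-\edistz-\epsc}}{\sum_{z'\in Z}e^{-\edistzp}},
\end{equation*}
i.e.\ the bound with \emph{no} discount on any denominator term. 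The theorem's denominator $\sum_{z'}e^{-\edistzp-\epsc\delta(z',c)}$ is strictly smaller (every same-class term is damped by $e^{-\epsc}$), so the theorem's right-hand side is strictly below yours, and your inequality does not imply it. Your justification for transferring the margin to the same-class terms --- that the worst-case displacement of $\centroid$ is ``coherent'' with them --- is not an inequality, and it is in fact false: the true centroid can move away from $z$ and simultaneously toward a \emph{different} sample of the same class. Concretely, in one dimension take $Z=\{z,z'\}$ both of class $c$, with $z=1$, $z'=-1$, so $\ecentroid=0$, and suppose the true centroid is $\centroid=-\varepsilon_c$, which lies inside $U_c$. Then $P(z|\centroid)=1/(1+e^{2\varepsilon_c})<1/2$, while the fraction on the theorem's right-hand side equals $e^{-1-\epsc}\big/\bigl(2e^{-1-\epsc}\bigr)=1/2$, so the claimed inequality $P(z|\centroid)\ge 1/2$ fails for every $\varepsilon_c>0$.

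For comparison: the paper's own proof contains exactly the same leap. It derives the same two triangle-inequality displays and then simply writes the bound with the $\epsc\,\delta(z',c)$ discount, even though applying the denominator inequality to every term of the normalizer yields only the no-discount bound displayed above. So your proposal faithfully reproduces the paper's argument --- including the step that does not follow --- and you correctly localized the exact point the paper glosses over. The statement is repairable, but differently from what you attempted: the only normalizer term that may legitimately keep a discount is $z'=z$ itself, since for that term the exponent $d(\centroid,z)-d(\centroid,z')$ vanishes identically and needs no slack; replacing $\delta(z',c)$ by the indicator of $z'=z$ gives a bound that your two triangle inequalities do prove (and which is tight, with equality, in the counterexample above). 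With the class-wise $\delta(z',c)$ of the statement, however, no proof exists.
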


\begin{proof}
Denote the negative log-likelihood of a given sample $z$ and class distribution $p$ with a centroid $\mu$ by $\mathcal{L}(z,p) = - \log P(z|\mu)$. With probability $p(\epsilon)$, the true distribution $p_c$ is within the uncertainty ball $p_c\in U_c$. Therefore   
\begin{equation}
    \mathcal{L}(z,p_c) 
    \leq \max_{q \in U_c}{\mathcal{L}(z,q)}
    = \max_{q\in U_c} (-\log P(z|\centroid^q)).
\end{equation}
Here, $\centroid^q$ is the centroid of a distribution $q\in U_c$. To bound $-\log P(z|\centroid^q)$, we use the triangle inequality and write: 
\begin{eqnarray}
    d(\centroid^q , z) &\leq& d(\ecentroid , z) + d(\ecentroid , \centroid^q) \\ \nonumber
    d(\ecentroid , z) &\leq& d(\centroid^q , z) + d(\ecentroid , \centroid^q) 
\end{eqnarray}
yielding
\begin{eqnarray}
    -d(\centroid^q,z) \!\!\!\!\!\! &\geq\!\! - d(\ecentroid,z) - d(\ecentroid,\centroid^q) \!\!\geq\!\! - d(\ecentroid,z) - \varepsilon_{c}
    \label{numerator_ineq}\\ %
    -d(\centroid^q,z) \!\!\!\!\!\! &\leq\!\! - d(\ecentroid , z) + d(\ecentroid , \centroid^q) \!\!\leq\!\! - d(\ecentroid , z) +  \varepsilon_{c}
\label{denominator _ineq}
\end{eqnarray}
Applying Eq.~\eqref{numerator_ineq} to the numerator of 
Eq. \eqref{norm_LL}, and applying Eq. \eqref{denominator _ineq} to its denominator, we obtain:
\begin{equation}
    P(z|\centroid^q) = {\frac{e^{-\qdistz}}
    {\sum_{z' \in Z}{e^{-\qdistzp}}}} \geq 
     \frac
        {e^{-\edistz-\epsc}}
        {\sum_{z' \in Z} e^{-\edistzp-\epsc\delta(z',c)}}
\end{equation}
where $\delta(z,c)=1$ when $c$ is the class of $z$ and $0$ otherwise. This inequality holds for all distributions $q\in U_c$, and also for the true distribution $\centroid$ as a special case (with probability $p(\epsilon)$). The negative log-likelihood is therefore bounded by
\begin{equation}
-\log P(z|\centroid) \leq -\log \frac
    {e^{-\edistz-\epsc}}
    {\sum_{z' \in Z} e^{-\edistzp-\epsc\delta(z',c)}}.
    \label{eq_upper}
\end{equation}
This completes the proof. 
\end{proof}

Based on the theorem, define a surrogate robustness loss:  
\begin{eqnarray}
    \mathcal{L}_{Robust} = - \sum_{c \in C} w(c)
    \sum_{z \in S_c}
    \log{\frac{e^{-\edistz-\epsc}} 
    {\sum_{z'}{e^{-\edistzp-\epsc\delta(z',c)}}}}
    \label{robust_loss}
\end{eqnarray}
This surrogate loss amends the loss of Eq. (8) in a simple and intuitive way. It increases the distance $\edistz$ between a sample and an empirical centroid in a way that depends on the radius of the uncertainty ball of a class.

\paragraph{Joint loss.}
In practice, we train the deep network with a combination of two losses. A standard cross-entropy loss is applied to the output of the classification layer, and the robust loss is applied to the latent representation of the penultimate layer. We linearly combine these two losses 
\begin{equation}
    \mathcal{L} = \lambda \mathcal{L}_{CE} + (1-\lambda)\mathcal{L}_{Robust}.
    \label{eq_tradeoff}
\end{equation}
The trade-off parameter $\lambda$ can be tuned using a validation set. See implementation details below. 

\paragraph{$p(\varepsilon)$ and Lower bound.} Appendix \ref{supp_p_epsilon} provides a formal definition of $p(\varepsilon)$. Appendix \ref{supp_lower_bound} derives a lower bound of our loss and also shows empirically that the bounds are tight in our experiments.

\section{Training}
\noindent \textbf{Uncertainty radius:} 
The size of the uncertainty $\epsilon$-ball around each class plays an important role.  When the uncertainty radius is too small, the probability that the true centroid is within the uncertainty area decreases, together with the probability that the bound holds. When the radius is too large, the bound is more likely to hold, but it becomes less tight. 
Furthermore, since tail classes have fewer samples, the estimate of class centroids is expected to be noisier and a larger radius is needed. 

We explored three ways to determine the radius:
\begin{enumerate}[noitemsep,topsep=0pt,leftmargin=10pt]
    \item \textbf{Shared $\varepsilon$:} All classes share the same uncertainty radius.
    \item \textbf{Sample count $\varepsilon / \sqrt{n}$:} The class radius scales with $1/\sqrt{n}$, where $n$ is the number of training samples.
This scaling is based on the fact that the standard-error-of-the-mean decays as $\sqrt{n}$, and leads to tail classes having a larger safety radius. See Appendix \ref{supp_sample_count} for more details.
    \item \textbf{Learned $\varepsilon$:} We treated the radius as a learnable parameter and tuned its value during training. 
\end{enumerate}

\noindent In the first two cases, the radius parameter $\varepsilon$ was treated as a hyper-parameter and tuned using a validation set.
~\newline

\vspace{-15pt}
\paragraph{Training process:} 
To compute the robustness loss, an initial feature representation of the data is required for estimating class centroids. As a result, during training, we first train the model with standard cross-entropy loss ($\lambda = 1$) to learn initial feature representations and centroids. Then, we add the \DRO{} loss to the training by setting $\lambda<1$. Finally, as in  \cite{Kang2019DecouplingRA}, we learn a balanced classifier by freezing the feature extractor and fine-tune only the classifier with balanced sampling.%

\vspace{-10pt}
\paragraph{Estimating centroids:}
Calculating the centroids for each class is computationally expensive if computed often using the full dataset. One could estimate the centroids within each batch, but with unbalanced data, minority classes would hardly have any samples and their centroid estimates would be very poor. 
Instead, we compute the features $z_i$ for every sample $x_i$ at the beginning of every epoch, compute the centroids for each class, and keep them fixed in memory for the duration of the epoch. 

\section{Experiments}

\subsection{Datasets}
We evaluated our proposed method using experiments on three major long-tailed recognition benchmarks.

\textbf{(1) CIFAR100-LT} \cite{cao2019learning}: A long-tailed version of CIFAR100 \cite{Krizhevsky2009LearningML}. CIFAR100 consists of 60K images from 100 categories.  Following \cite{cao2019learning}, we control the degree of data imbalance with an imbalance factor $\beta$. $\beta = \frac{N_{max}}{N_{min}}$, where $N_{max}$ and $N_{min}$ are the number of training samples for the most frequent and the least frequent classes, respectively. We conduct experiments with $\beta =$ 100, 50, and 10.
\newline \textbf{(2) ImageNet-LT} \cite{liu2019large}: A long-tailed version of the large-scale object classification dataset ImageNet \cite{Deng2009ImageNetAL} by sampling a subset following the Pareto distribution with power value $\alpha=6$. Consists of 115.8K images from 1000 categories with 1280 to 5 images per class.
\textbf{(3) iNaturalist} \cite{Horn2017TheIC}: A large-scale dataset for species classification. It is long-tailed by nature, with an extremely unbalanced label distribution. Has 437.5K images from 8,142 categories.

\subsection{Compared methods}
We compared our approach with the following methods. 

\noindent\textbf{\textit{(A) Baselines:}} \textbf{CE:} Naive training with a cross-entropy loss; \textbf{ReSample:} Over-sampling classes to reach a uniform distribution as in   \cite{cao2019learning}. \textbf{\textit{(B) Loss-manipulation:}} \textbf{Reweight:} Reweight the loss as in \cite{cao2019learning,zhou2020bbn}; \textbf{Focal Loss} \cite{Lin2017FocalLF} and \textbf{LDAM Loss} \cite{cao2019learning}. \textbf{\textit{(C) Two-stage fine-tuning:}} \textbf{$\tau$-norm} \cite{Kang2019DecouplingRA}, \textbf{cRT} \cite{Kang2019DecouplingRA} and \textbf{smDragon} \cite{samuel2020longtail}. \textbf{\textit{(D) Representation learning:}} SSL the method by  \cite{supervisedsimclr} and SSP, the method by \cite{yang2020rethinking}.  

\noindent\textbf{\textit{(E) DRO-LT variants:}} We compared four ways to set the radii of $U_c$.  Set $\varepsilon=0$ (ERM); $\varepsilon$ value shared across all classes; A shared value divided by the square root of class size ($\varepsilon/\sqrt{n}$);  and a learned value per class.

\begin{table}[t!]
    \centering
    \scalebox{0.8}{
    \setlength{\tabcolsep}{10pt}
    \begin{tabular}{l|ccc} 
    Imbalance Type & \multicolumn{3}{c}{\textbf{Long-tailed CIFAR-100}} \\
     \hline
     Imbalance Ratio & 100 & 50 & 10 \\ [0.5ex] 
      \hline
       CE*& 38.32 & 43.85 & 55.71 \\
     ReSample~\cite{Cui2019ClassBalancedLB} & 33.44 & - & 55.06  \\
     \hline
      ReWeight~\cite{Cui2019ClassBalancedLB} & 33.99 & - & 57.12 \\
     Focal Loss~\cite{Lin2017FocalLF} & 38.41 & 44.32 & 55.78 \\
     LDAM Loss~\cite{cao2019learning} & 39.60 & 44.97 & 56.91 \\
     \hline
     $\tau$-norm*~\cite{Kang2019DecouplingRA} & 41.11 & 46.74 & 57.06 \\
     cRT*~\cite{Kang2019DecouplingRA} & 41.24 & 46.83 & 57.93 \\
     smDRAGON~\cite{samuel2020longtail} & 43.55 & 46.85 & 58.01 \\
     \hline
       SSL* ~\cite{supervisedsimclr} & 37.51 & 44.02 & 56.70 \\
       SSP ~\cite{yang2020rethinking} & 43.43 & 47.11 & 58.91 \\
     \hline
    \textbf{DRO-LT (Ours)} &&& \\ $\varepsilon=0$ (ERM) & 43.92 & 52.31 & 59.54 \\
    Shared $\varepsilon$ & 45.66 & 55.32 & 61.22 \\
    $\varepsilon/\sqrt{n}$ & 46.92 & 57.20 & 63.10 \\
  Learned $\varepsilon$ & \textbf{47.31} & \textbf{57.57} & \textbf{63.41}\\
    \bottomrule
    \end{tabular}}
    \caption{Top-1 accuracy of ResNet32 on long-tailed CIFAR-100~\cite{cao2019learning}, comparing our method and SoTA techniques. Asterisks * denote reproduced results. DRO-LT variants achieve best results on all imbalance ratios. 
    }
    \label{cifar-bench}
\end{table}

\subsection{Evaluation Protocol}
Following \cite{liu2019large} and \cite{Kang2019DecouplingRA}, we report the top-1 accuracy over all classes on class balanced test sets. This metric is denoted by \textit{"Acc"}. For CIFAR-100 and ImageNet-LT, we further report accuracy on three splits of the set of classes. \textit{"Many":} classes with more than 100 train samples; \textit{"Med":} classes with 20 -- 100 train samples;  and \textit{"Few": } classes with less than 20 train samples. 

\subsection{Implementation details}
In all experiments, we use an SGD optimizer with a momentum of 0.9 to optimize the network. For long-tailed CIFAR-100, we follow \cite{cao2019learning} and train a ResNet-32 backbone on one GPU with a multistep learning-rate schedule. For ImageNet-LT and iNaturalist, we follow \cite{Kang2019DecouplingRA} and use the cosine learning-rate schedule to train a ResNet-50 backbone on 4 GPUs.

\textbf{Hyper-parameter tuning:} We determined the number
of training epochs (early-stopping), and tuned hyperparameters using the validation set.
We optimized the following hyper-parameters: (1) Radius parameter $\varepsilon \in \{1,2,5,10,30,70\}$ for "Shared $\varepsilon$" and "Sample Count $\varepsilon / \sqrt{n}$". (2) Trade-off parameter $\lambda \in {\{0,0.3,0.5,0.7,1\}}$. (3) Learning rate $\in {\{10^{-4},10^{-3},10^{-2}\}}$. we studied the sensitivity of the accuracy to the values of $\varepsilon$ and $\lambda$, and found that high accuracy is obtained for a wide range of $\epsilon$ values. We also found accuracy to be quite stable when tuning $\lambda$ and selected $\lambda=0.5$. See Appendix \ref{supp_lambda_trade_off} for details.

\begin{figure}[t!]
\centering 
    \includegraphics[width=0.66\columnwidth]{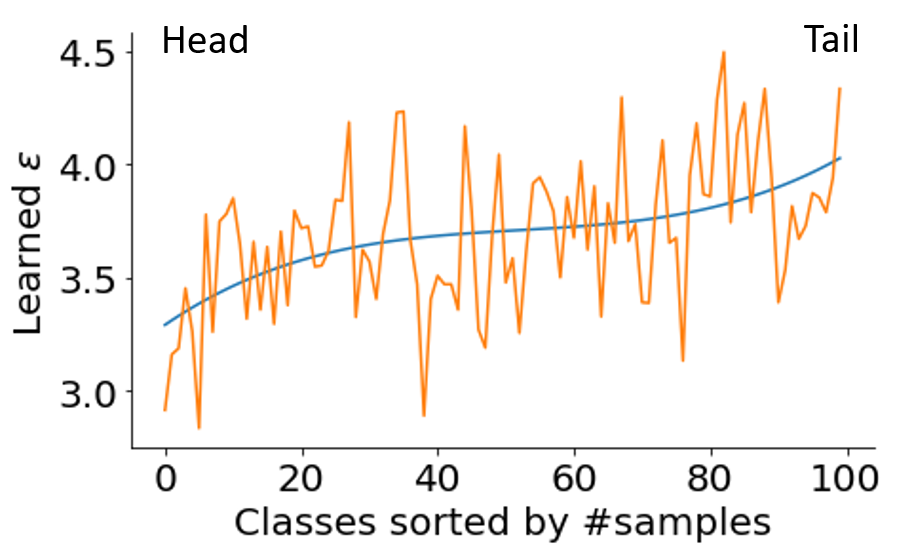}
    \caption{Learned $\varepsilon$ values for each class of a ResNet-32 trained on CIFAR100-LT with imbalance ratio 50. Blue: Polynomial fit.
    }
    \label{class-wise-epsilons}
\end{figure}

\begin{table*}[t!]
    \begin{center}
      \scalebox{0.8}{
    \setlength{\tabcolsep}{5pt} %
    \begin{tabular}{l|cccc|cccc} 

     & \multicolumn{4}{c|}{\textbf{Long-tailed CIFAR-100}} &
    \multicolumn{4}{c}{\textbf{Long-tailed ImageNet}} \\
 \hline
 Methods & Many & Med & Few & Acc & Many & Med & Few & Acc  \\ [0.5ex] 
  \hline
  CE*& \textbf{65.5} & 37.9 & 7.4 & 38.3 &
  \textbf{64.0} & 38.8 & 5.8 & 41.6 \\
 LDAM Loss~\cite{cao2019learning} & 61.0 & 41.6 & 19.8 & 39.6 & - & - & - & -  \\
 OLTR ~\cite{openlongtailrecognition} & 61.8 & 41.4 & 17.6 & 41.2 & - & - & - & - \\
 $\tau$-norm~\cite{Kang2019DecouplingRA} & 61.4 & 42.5 & 15.7 & 41.1 & 56.6 & 44.2 & 27.4 & 46.7\\
  smDragon ~\cite{samuel2020longtail} & 60.5 & 44.3 & 23.5 & 43.5 & 59.7 & 44.2 & 25.3 & 47.4\\
  SSL * ~\cite{supervisedsimclr} & 64.1 & 36.9 & 7.1 & 37.5 &
  61.4 & 47.0 & 28.2 & 49.8 \\
  \hline
  \textbf{DRO-LT (Ours)} &&&&&&& \\
  $\varepsilon=0$ (ERM) & 61.9 & 43.7 & 22.2 & 43.9 &
  61.0 & 45.5 & 26.8 & 48.1 \\
  Shared $\varepsilon$ & 64.1 & 47.9 & 21.5 & 45.7 $\pm$ 0.2&
  62.6 & 45.2 & 30.5 & 51.6 $\pm$ 0.4\\
  $\varepsilon/\sqrt{n}$ & 65.0 & 49.8 & 22.3 & 46.9 $\pm$ 0.2&
  63.8 & 49.5 & 32.7 & 53.0 $\pm$ 0.3\\
  Learnable $\varepsilon$ & 64.7 & \textbf{50.0} & \textbf{23.8} & \textbf{47.3 $\pm$ 0.1} &
  \textbf{64.0} & \textbf{49.8} & \textbf{33.1} & \textbf{53.5 $\pm$ 0.2} \\
    \bottomrule
    \end{tabular}}
    \end{center}
    \vspace{-15pt}
    \caption{Top-1 accuracy on long-tailed CIFAR-100~\cite{cao2019learning} with imbalance factor 100, and ImageNet-LT\cite{liu2019large}. We also report many-shot ("Many"), medium-shot ("Med") and few-show ("Few") performance separately. Our method performs well on tail classes without sacrificing head accuracy. Asterisks * denote results reproduced using code published by authors.}
    \label{cifar-imagenet-bench}
\end{table*}

\section{Results}
\label{sec:results}
\mypar{CIFAR100-LT: } Table \ref{cifar-bench} compares DRO-LT with common long-tail methods on CIFAR100-LT. It shows that all our robust loss variants consistently achieve the best results on all imbalance factors, emphasizing the importance of robust learning in unbalanced data. "Learned $\varepsilon$" outperforms other variants of our method. \textbf{CIFAR-10-LT:} Appendix  \ref{supp_cifar_10} provides results for CIFAR-10-LT with imbalanced ratio 100. 
\textbf{ImageNet-LT: } 
Table \ref{cifar-imagenet-bench} further evaluates our approach on ImageNet-LT and CIFAR100-LT (imbalance factor=100) reporting accuracy for different test splits. DRO-LT performs well on tail classes ("Few") as well as head classes ("Many"). This contrasts with previous methods that sacrifice head accuracy for better tail classification. This also suggests that DRO-LT learns high-quality features for all classes. 
\textbf{iNaturalist:} Table \ref{iNaturalist-bench} evaluates our approach on the large-scale iNaturalist. DRO-LT slightly improves the accuracy compared with SoTA baselines. See Appendix \ref{supp_inat} for more results and further analysis.

\begin{SCtable}
      \scalebox{0.8}{
    \setlength{\tabcolsep}{5pt} 
    \begin{tabular}{l|c}
    {} & \textbf{iNaturalist}\\
    \midrule
    CE & 61.7   \\
    LDAM Loss~\cite{cao2019learning} & 68.0  \\
    $\tau-$norm~\cite{Kang2019DecouplingRA} & 69.5 \\
    CB LWS~\cite{Kang2019DecouplingRA} & 69.5   \\
    smDragon~\cite{samuel2020longtail} & 69.1 \\
    SSL *~\cite{supervisedsimclr} & 66.4 \\
\midrule
    \textbf{DRO-LT (ours)} \\
    Learned $\varepsilon$ & \textbf{69.7 $\pm$ 0.1}
    \\
    \bottomrule
    \end{tabular}}
\caption{Top-1 accuracy on long-tailed iNaturalist. DRO-LT achieves slightly better results compared to previous methods. Asterisks * denote reproduced results by us.}
\label{iNaturalist-bench}
\end{SCtable}

\mypar{Classifier vs feature extractor:} Our method focuses on improving the learned representation at the penultimate layer. Other methods focused on improving the classifier applied to that representation. Therefore, it is interesting to explore the relation between these two tasks (compare with  \cite{zhou2020bbn,Kang2019DecouplingRA}). We, therefore, compared different feature extractors and classifier training methods.

\ignore{
\begin{figure*}[t!]
    \centering
    \includegraphics[width=0.60\columnwidth]{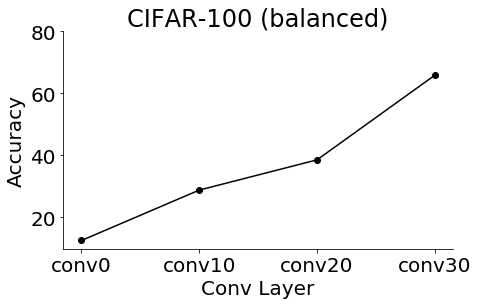}
    \includegraphics[width=0.60\columnwidth]{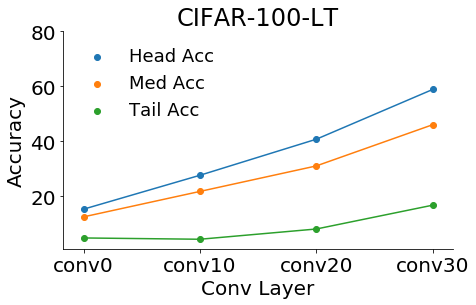}
    \vspace{-10pt}
    \caption{
    Accuracy of a nearest-centroid neighbor classifier when applied to convolutional layers 0, 10, 20, and 30 of a ResNet-32.
    \textbf{Left}: The model was trained on balanced CIFAR-100. The validation accuracy grows when using higher layers.
    \textbf{Right:} The model was trained on CIFAR-100-LT. We report balanced validation accuracy for head classes (blue), medium classes (orange) and tail classes (green). The accuracy gap between head and tail classes is substantial even at all layers.}
    \label{fig:nn-imbalance}
\end{figure*}
}

\begin{figure}[t!]
    \centering
    \includegraphics[width=0.7\columnwidth]{figures/biased_feature_bal_cifar.png.png}
    \includegraphics[width=0.7\columnwidth]{figures/biased_feature_lt_cifar.png.png}
    \vspace{-10pt}
    \caption{
    Accuracy of a nearest-centroid neighbor classifier when applied to convolutional layers 0, 10, 20, and 30 of a ResNet-32.
    \textbf{Top}: The model was trained on balanced CIFAR-100. The validation accuracy grows when using higher layers.
    \textbf{Bottom:} The model was trained on CIFAR-100-LT. We report balanced validation accuracy for head classes (blue), medium classes (orange) and tail classes (green). The accuracy gap between head and tail classes is substantial even at all layers.}
    \label{fig:nn-imbalance}
\end{figure}

For representation learning, we employ plain training with a cross-entropy loss (\textbf{CE}), balanced resampling of the data (\textbf{RS}) and our method (\textbf{DRO-LT}). For classifier learning, we freeze the parameters of the feature extractor and fine-tune the classifier in three ways: cross-entropy loss (\textbf{CE}), re-sampling (\textbf{RS}) following the protocol of \cite{cao2019learning}, and balanced classifier (\textbf{LWS}) \cite{Kang2019DecouplingRA}. Table \ref{tab:rep-clf} provides the top-1 accuracy of all combinations. It shows that our representation learning approach enables all types of classifiers to reach good performance, compared to other representation learning approaches. This strongly suggests that our method learns: (1) good feature representations for both head and tail classes, and (2) to separate them in a way that discriminative classifiers can easily distinguish between classes. 

\begin{table}[t!]
    \begin{center}
        \scalebox{0.8}{
        \setlength{\tabcolsep}{5pt}
        \begin{tabular}{l|c|c|c} 
        & \multicolumn{3}{c}{Classifier learning} \\
        \hline
        Representation  &  &   &  \\
        learning & CE & RS (cRT) & LWS \cite{Kang2019DecouplingRA} \\ [0.5ex] 
        \hline
         CE & 38.5 & 41.2 & 41.4  \\
         RS & 34.9 & 37.6 & 36.3 \\
         DRO-LT (\textbf{ours}) & \textbf{41.2} & \textbf{47.3} & \textbf{46.8}\\
        \bottomrule
        \end{tabular}}
    \end{center}
    \vspace{-15pt}
    \caption{Top-1 accuracy of different representation learning manners and classifier learning manners, on CIFAR100-LT. CE refers to Cross-Entropy, RS refers resampling and LWS refers to balanced classifier according to \cite{Kang2019DecouplingRA}. The results suggest that simple resampling methods achieve good results when the learned features are good for both head and tail.}
  \label{tab:rep-clf}
\end{table}

\begin{figure}[t]
    \centering
        \includegraphics[width=0.75\columnwidth]{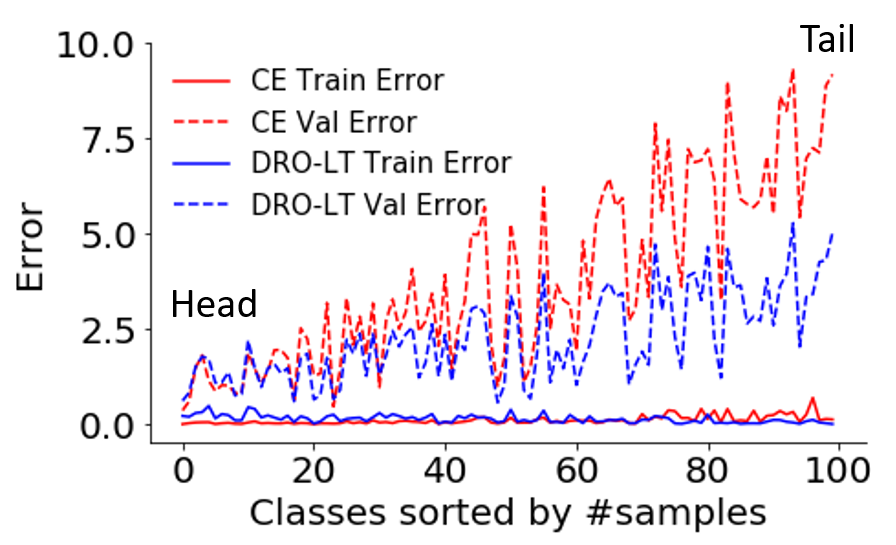}
        \vspace{-10pt}
    \caption{Comparing train and test error between a model trained with vanilla cross-entropy loss and a model trained with our DRO-LT. The gap between the train error and test error for tail classes is much lower in DRO-LT compared with vanilla, while remaining the same for head classes. 
    }
    \label{class-wise-error}
\end{figure}

\mypar{Adaptive robustness:} Figure \ref{class-wise-epsilons} shows the values of uncertainty per-class radii ($\varepsilon$) that were learned by training a ResNet-32 on CIFAR100-LT with an imbalance ratio of 50. The model learned slightly larger radii for tail classes compared with head classes. See the supplemental for details about the effect of the radius on accuracy.

\mypar{Robustness:} Our loss is expected to improve recognition mostly at tail classes. Figure \ref{class-wise-error} compares train error and test error between a model trained with cross-entropy loss (red) and a model trained with our approach (blue). Using a robustness loss cuts down errors substantially, in tail classes, without hurting head classes. 

\mypar{Feature space visualization:} To gain additional insight, we look at the t-SNE projection of learned representations and compared vanilla cross-entropy loss with our proposed method. Figure \ref{tsne-vis} shows that our learned feature space is more compact with margins around head and tail classes. Tail classes have larger margins since the estimation of their features is less accurate.

\mypar{How unbalanced are latent representations?} 
The above analysis focused on correcting the representation at the penultimate layer. But how biased are the representations at lower layers?
Intuitively, the low layers represent more "physical" properties of inputs, while higher layers capture properties that correspond more to semantic classes. 
One would expect that early layers would be quite balanced.

We tested class imbalance in several layers of a ResNet-32 in the following way. We first trained a ResNet-32 on the unbalanced CIFAR100-LT dataset. Then, for each latent representation, we computed the centroids of each class and used them to classify all samples using a nearest-centroid classifier. Figure \ref{fig:nn-imbalance} shows the accuracy obtained with this classifier when applied to layers 0, 10, 20, and 30 of the ResNet-32. When training with balanced data (up), the accuracy grows when using higher layers, as expected. Surprisingly, when training with unbalanced data (down), there is a substantial difference in accuracy achieved for head and tail classes in every layer that we tested. While accuracy grows when using higher layers, accuracy differences between head and tail classes are maintained, even in low layers that are believed to represent class agnostic features. In Appendix \ref{supp_centroid_nn}, we compare the accuracy of a nearest-centroid neighbor classifier between a model trained with standard cross-entropy loss and one trained with DRO-LT. We show that DRO-LT narrows the accuracy gap between head and tail.

\begin{figure}[t!]
    \includegraphics[width=0.73\linewidth]{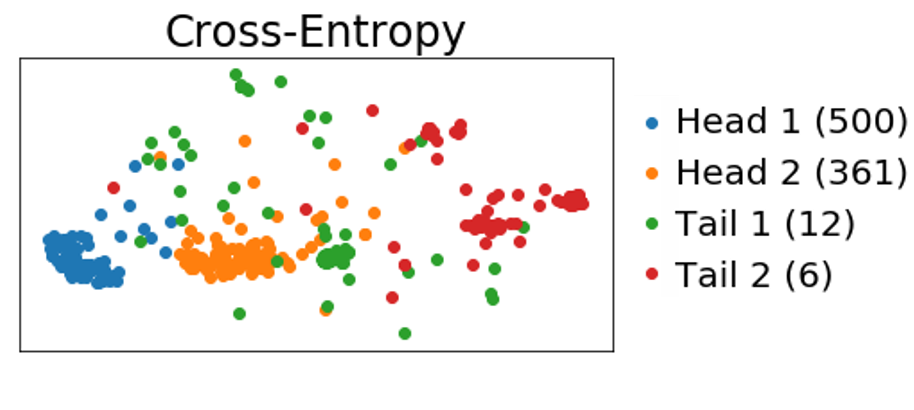}
    \includegraphics[width=0.51\linewidth]{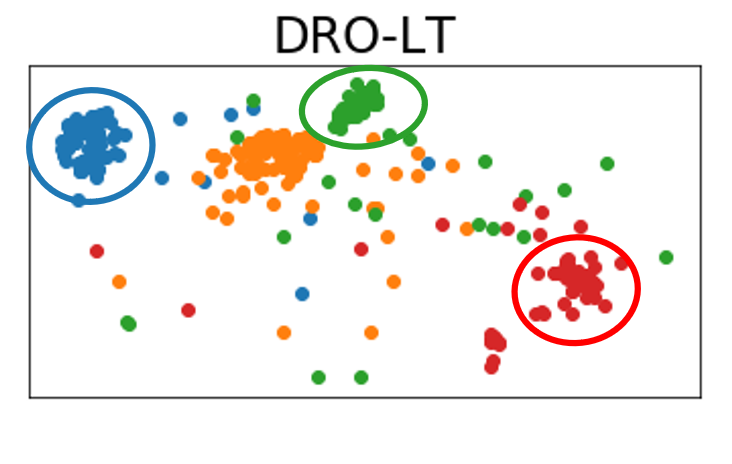}
    \vspace{-15pt}
    \caption{t-SNE visualization of embedding space of CIFAR100-LT obtained using cross-entropy loss and DRO-LT Loss method. The feature embedding of our model is more compact for both head (blue) and tail (green and red) classes and better separated. The number of samples for each class is written in parentheses.}
    \label{tsne-vis}
\end{figure}

\vspace{-5pt}
\section{Discussion}
\vspace{-5pt}
This paper investigates the feature representations learned by deep models trained on long-tail data. We find that such models suffer greatly from bias towards head classes in their feature extractor (backbone), which hurts recognition. This is in contrast to previous studies suggesting that unbalanced data does not hurt representation learning and re-balancing the classifier layer is sufficient. %
To learn a balanced representation, we take a robustness approach and develop a novel loss based on Distributionally Robust Optimization (DRO) theory. We further derive an upper bound of that loss that can be minimized efficiently. 
We show how the robustness safety margin can be learned during training, and do not require additional hyper-parameter tuning.

Training with a combination of the DRO-LT loss and a standard classifier sets new state-of-the-art results on three long-tail benchmarks: CIFAR100-LT, ImageNet-LT, and iNaturalist.
Our method not only improves the performance of tail classes but also maintains high accuracy at the head. %
These results suggest that proper training of representations for unbalanced data can have a large impact on downstream accuracy. 
We believe that our finding not only contributes to a deeper understanding of the long-tailed recognition task but can offer inspiration for future work.

~\newline 
\noindent
\small
\textbf{Acknowledgments:}
This work was funded by the Israeli innovation authority through the AVATAR consortium;  by the Israel Science Foundation (ISF grant 737/2018); and by an equipment grant to GC and Bar Ilan University (ISF grant 
2332/18).

\newpage

{\small
\bibliographystyle{ieee_fullname}
\bibliography{robust}
}

\clearpage

\appendix

{\Large{\textbf{Supplemental Material}}}

\section{Additional information about  $p(\epsilon)$}
\label{supp_p_epsilon}

We provide a more formal definition of the probability $p(\epsilon)$. 
Consider the empirical distribution $\widehat{p_c(z)}$ of $n$ samples $z_1,.\ldots,z_n$ from class $c$. This empirical distribution $\hat{p}$ can be viewed by itself as a random variable because it receives different values for different instantiations of the random sample. $\hat{p}$ is distributed over the simplex $\Delta^d$ where $d$ is the dimension of the representation of $z$. As a result, the $\epsilon$-ball around $\ecentroid$ is also a random variable, since different random samples yield different centroids and balls. 

For any given $\epsilon$, some of these balls may cover the true distribution $p$ and some may not, depending on the instances $z_i$ drawn.  $p(\epsilon)$ denotes the probability that such an $\epsilon$-ball covers the true distribution. 

\section{Tightness of the bound} 
\label{supp_lower_bound}
We derive a lower bound of our loss and evaluate empirically that the bounds are tight.

The lower bound is easily derived in a very similar way to the derivation of the upper bound in Theorem 1 using the triangle inequality. The bound has the following form

\begin{equation}
-\log P(z|\centroid) \geq -\log \frac
    {e^{-\edistz+\epsc}}
    {\sum_{z' \in Z} e^{-\edistzp+\epsc\delta(z',c)}}
    \label{eq_lower}
\end{equation}

We further estimated empirically the relative magnitude of the bound gap 
$\frac{|Eq.\ref{eq_upper} - Eq.\ref{eq_lower}|}{Eq.\ref{robust_loss}}$, averaged over all samples. For CIFAR100-LT it was $0.07$, suggesting that the bounds are quite tight in our case.

\section{DRO-LT Sample count $\epsilon / \sqrt{n}$}
\label{supp_sample_count}

Given a set of samples $z_i$, their mean is known to have a standard deviation of $\sigma/\sqrt{n}$, where $\sigma$ is the standard deviation of the sample distribution $p(z)$. 
In our case, $\sigma$ is not known. 

The DRO-LT variant that we call  \textbf{Sample count} $\epsilon / \sqrt{n}$, can be viewed as assuming that all classes share the same standard deviation of their sample distribution $\sigma$, which we tune as a hyperparameter, and the uncertainty about class centroid only varies by the number of samples. 

\section{Loss trade-off parameter $\lambda$ }
\label{supp_lambda_trade_off}

Figure \ref{fig:lambda_ablation} quantifies the effect of the trade-off parameter  $\lambda$ (Eq.~\ref{eq_tradeoff}) on the validation accuracy. The model was trained on CIFAR100-LT with an imbalance factor of 100 and with our DRO-LT loss (\textbf{Learnable} $\varepsilon$ variant). It shows that training with DRO-LT alone ($\lambda = 0$) is not enough and leads to poor accuracy. Combining the robustness loss with a discriminative loss (cross-entropy) gives the best results and suggests high-quality feature representations and a discriminative classifier.

\section{Robustness vs Performance}
\label{supp_robustness_vs_performance}

\begin{figure}[t!]
\centering 
    \includegraphics[width=0.95\columnwidth]{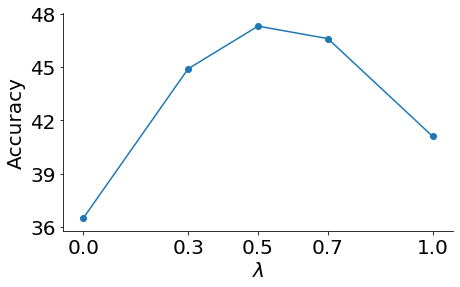}
    \caption{
    Validation accuracy of a model trained on CIFAR100-LT (imbalance factor 100) with different loss trade-off parameter ($\lambda$) values between standard cross-entropy loss and our DRO-LT loss (Learnable $\varepsilon$ variant).
    }
    \label{fig:lambda_ablation}
\end{figure}

\begin{figure}[t!]
\centering 
    \includegraphics[width=0.95\columnwidth]{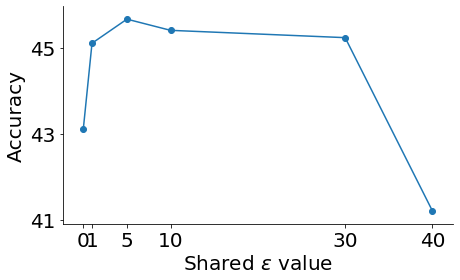}
    \caption{
    Validation accuracy of a model trained on CIFAR100-LT (imbalance factor 100) with different uncertainty radius $\varepsilon$. The radius is set to be equal for all classes in our \textbf{shared} $\varepsilon$ variant. 
    }
    \label{fig:shared_epsilon_ablation}
\end{figure}

Figure \ref{fig:shared_epsilon_ablation} explores the effect of different uncertainty radii ($\varepsilon$) on the validation accuracy of a model trained on CIFAR100-LT with an imbalance factor 100. The radius is set to be equal for all classes in our DRO-LT loss (\textbf{Shared} $\varepsilon$ variant). It shows that the accuracy is maintained over a large range of $\varepsilon$ values. Setting $\varepsilon$ to very small values nullifies the robustness and reduces accuracy.
At the same time, very large values of $\varepsilon$ cause the worst-case centroid in the $\epsilon$-ball to be too far from $\centroid$ making the bound too loose and again reduces the accuracy. 

\begin{figure*}[t!]
\centering 
    \includegraphics[width=0.95\columnwidth]{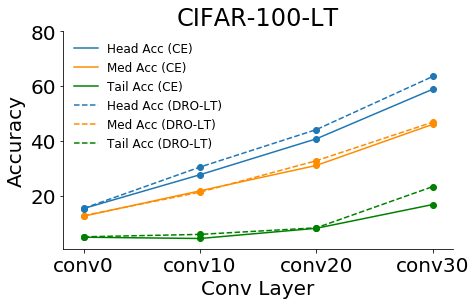}
    \caption{Accuracy of a nearest-centroid neighbor classifier when applied to convolutional layers 0, 10, 20, and 30 of a ResNet-32. Our model narrows the accuracy gap between head and tail classes.
    The model was trained on CIFAR-100-LT with an imbalance factor of 100. We compare a model trained with standard cross-entropy loss (solid line) and a model trained with DRO-LT (dashed line) where the loss is applied to the last convolutional layer (conv30). We report balanced validation accuracy for head classes (blue), medium classes (orange), and tail classes (green).
    }
    \label{fig:bias_with_ours}
\end{figure*}

\section{More analysis on iNaturalist:}
\label{supp_inat}

\begin{table}[t!]
    \begin{center}
      \scalebox{1}{
    \setlength{\tabcolsep}{3pt} %
\begin{tabular}{l|cccc}
    \textbf{iNaturalist} & Many & Med & Few & Acc \\
    \midrule
    CE*& 72.2 & 63.0 & 57.2 & 61.7 \\
    CB LWS & 71.0 & 69.8 & 68.8 & 69.5 \\
    \hline
     \textbf{DRO-LT (ours)} \\
    Shared $\varepsilon$ & 78.2 & 70.6 & 64.7 & 69.0 $\pm$ 0.2 \\
  $\varepsilon/\sqrt{n}$ & 71.0 & 68.9 & \textbf{69.3} & 69.1 $\pm$ 0.2 \\
  Learned $\varepsilon$& \textbf{73.9} & \textbf{70.6} & 68.9 & \textbf{69.7 $\pm$ 0.1} 
    \\
    \bottomrule
    \end{tabular}
    }
    \end{center}
    
    \caption{Top-1  accuracy on long-tailed iNaturalist with accuracy broken to Many-shot, Medium-shot and Few-shot classes. Our approaches improves the performance on head and tail classes.}
    \label{baselines_bench_inat}
\end{table}

Table \ref{baselines_bench_inat} compares DRO-LT with common long-tail methods on iNaturalist with accuracy
broken by class frequency: many-shot (”Many”), medium-shot (”Med”) and few-show (”Few”). It shows that
improvement is larger at the head (Many) and tail (Few), but
relatively small for most of the classes in this dataset (Med). We also provide the variance for 10 runs. 

\section{Results for CIFAR-10-LT:}
\label{supp_cifar_10}

Table \ref{baselines_bench} compares our approach with common long-tail methods on CIFAR-10-LT~\cite{cao2019learning}.  Our method outperforms all baselines.

\section{Balancing latent representations}
\label{supp_centroid_nn}

Here, we provide more analysis on the imbalance of latent representation and compare our approach with a standard baseline.

Figure \ref{fig:bias_with_ours} shows the accuracy obtained with a nearest-centroid classifier when applied to layers 0, 10, 20, and 30 of a ResNet-32. We compare a model trained with standard cross-entropy loss (solid line) and a model trained with DRO-LT (dashed line) where the loss is applied to the last convolutional layer (conv30). We show that our model narrows the accuracy gap between head classes (blue) and tail classes (green), mostly in the last layer. This shows the effectiveness of our approach on the latent representations of the deep models and suggests that applying our loss to the rest of the layers might result in a more balanced model overall.

\begin{table}[t!]
\begin{center}
      \scalebox{0.9}{
    \setlength{\tabcolsep}{5pt} 
    \begin{tabular}{l|c}
    \textbf{CIFAR-10-LT} & Acc \\
    \midrule
    CE & 72.1   \\
    CB LWS \cite{Kang2019DecouplingRA} & 73.5   \\
    LDAM DRW \cite{cao2019learning} & 77.03 \\
    smDragon \cite{samuel2020longtail} & 79.6 \\
\midrule
    Learned $\varepsilon$ (ours) & \textbf{82.6}
    \\
    \bottomrule
    \end{tabular}}
    \end{center}
\caption{Top-1 accuracy on long-tailed CIFAR-10 \cite{cao2019learning} with imbalance factor 100.}
    \label{baselines_bench}
\end{table}

\end{document}